\documentclass[11pt]{article}

\usepackage{amsmath,amssymb,amsthm}
\usepackage[margin=1in]{geometry}

\newtheorem{theorem}{Theorem}
\newtheorem{proposition}{Proposition}
\newtheorem{lemma}{Lemma}
\newtheorem{assumption}{Assumption}
\newtheorem{example}{Example}
\newtheorem{remark}{Remark}

\DeclareMathOperator{\KL}{KL}

\title{\textbf{From Score Approximation to Distribution Approximation in Score-Based Diffusion Models}}
\author{\textbf{Lan V. Truong}\\
Faculty of Computer Science and Engineering\\
Ho Chi Minh City University of Technology (HCMUT)\\
Vietnam National University Ho Chi Minh City (VNU-HCM), Vietnam\\
\texttt{lantv@hcmut.edu.vn}}

\date{}

\begin{document}

\maketitle

\begin{abstract}
Score-based diffusion models have achieved remarkable empirical success in generative modeling, yet their approximation-theoretic foundations remain incomplete. In particular, although classical universal approximation theorems guarantee that neural networks can approximate score functions, it remains unclear whether such approximation guarantees translate into approximation of the probability distributions generated by reverse diffusion processes. In this paper, we establish a rigorous quantitative connection between these two notions. Specifically, we prove that if a neural network approximates the true score function sufficiently accurately, then the probability distribution generated by the corresponding reverse diffusion model is close to the target data distribution in Kullback–Leibler (KL) divergence, up to an irreducible mismatch between the terminal distribution of the forward diffusion process and the prior used to initialize the reverse process. More precisely, we derive an explicit upper bound on the distribution approximation error in terms of the score approximation error, the diffusion noise schedule, and the terminal prior mismatch. Our analysis combines Hornik’s universal approximation theorem, Girsanov’s theorem on path space, and the data processing inequality for relative entropy. Complementary to recent work that studies score approximation under finite-sample statistical settings and structural assumptions on the data distribution, our work develops an approximation-theoretic analysis based on classical neural network approximation theory. The resulting theorem provides a simple and explicit guarantee linking neural network approximation of score functions to approximation of the probability distributions generated by reverse diffusion models.
\end{abstract}

\tableofcontents

\newpage

\section{Introduction}

Score-based diffusion models have recently emerged as one of the most successful
frameworks for generative modeling, achieving state-of-the-art performance in
image generation, audio synthesis, molecular design, and scientific machine
learning. Representative examples include denoising diffusion probabilistic
models (DDPMs) and score-based generative models formulated through stochastic
differential equations (SDEs). Their remarkable empirical performance has
motivated growing interest in understanding their mathematical foundations.

A central object in score-based diffusion models is the \emph{score function}
\[
s(x,t)=\nabla_x\log p_t(x),
\]
which characterizes the gradient of the log-density of the forward diffusion
process. In practice, this score function is unknown and is approximated by a
deep neural network trained using denoising score matching. Consequently, the
quality of the learned generative model fundamentally depends on how accurately
the neural network approximates the true score function.

Although classical universal approximation theorems guarantee that neural
networks can approximate a broad class of functions, these results concern only
function approximation. They do not directly imply that approximating the score
function yields an accurate approximation of the probability distribution
generated by the reverse diffusion process. Establishing such a connection
remains an important theoretical challenge.

The objective of this paper is to establish a rigorous quantitative relationship
between \emph{score approximation} and \emph{distribution approximation} for
score-based diffusion models. Specifically, we prove that an arbitrarily
accurate approximation of the score function in the $L^2$ sense leads to an
arbitrarily small Kullback--Leibler (KL) divergence between the target data
distribution and the distribution generated by the learned reverse diffusion
process.

Our analysis proceeds in two steps.

First, we study the path measures induced by the true and learned reverse-time
stochastic differential equations. Using Girsanov's theorem, we derive an
explicit identity for the Kullback--Leibler divergence between the corresponding
path measures:
\[
\KL(P\|Q_\theta)
=
\KL(p_T\|p_{\mathrm{prior}})
+
\frac12
\int_0^T
g^2(t)
\,
\mathbb E_{p_t}
\!\left[
\|s_\theta(X_t,t)-s(X_t,t)\|^2
\right]
dt.
\]

Second, we apply the data processing inequality for relative entropy to show
that the KL divergence between the generated data distributions is bounded
above by the corresponding path-space divergence. Combining these two results
with Hornik's universal approximation theorem yields our main result: for every
$\varepsilon>0$, there exists a neural network satisfying
\[
\|s_\theta-s\|_{L^2(\mu)^d}^2<\varepsilon,
\]
such that
\[
\KL(\mu\|\mu_\theta)
<
\delta
+
\frac12
g_{\max}^2
\varepsilon,
\]
where
\[
\delta=\KL(p_T\|p_{\mathrm{prior}})
\]
quantifies the discrepancy between the terminal distribution of the forward
diffusion process and the prior used to initialize the reverse diffusion
process.

Recent work by Chen et al.~\cite{Chen2023ScoreAE} investigated the relationship
between score approximation and distribution approximation from a statistical
learning perspective, deriving approximation guarantees under finite-sample
settings and structural assumptions on the underlying data distribution,
including low-dimensional linear subspaces. Our work complements this line of
research by developing an approximation-theoretic perspective. Rather than
analyzing finite-sample estimation error, we combine classical results from
neural network approximation theory, stochastic analysis, and information
theory to derive a simple and explicit quantitative bound connecting score
approximation with approximation of the generated probability distribution.
This provides a mathematical justification for the expressive power of neural
score models and offers an approximation-theoretic foundation for score-based
diffusion models.

The remainder of the paper is organized as follows.
Section~2 discusses the related literature.
Section~3 reviews score-based diffusion models and the corresponding
reverse-time stochastic differential equation.
Section~4 presents the assumptions used throughout the paper and discusses
their validity in several important settings.
Section~5 establishes the relationship between score approximation and
path measure approximation.
Section~6 establishes that approximation of path measures implies
approximation of the generated data distribution.
Section~7 combines these results with Hornik's universal approximation
theorem to establish our main approximation theorem for reverse diffusion
models.
Finally, Section~8 presents several extensions of the proposed framework
and discusses directions for future research.

\section{Related Work}

\paragraph{Score-Based Diffusion Models.}
Diffusion models have recently become one of the dominant approaches to
generative modeling. Early score-based generative models were introduced by
Song and Ermon \cite{SongErmon2019}, who proposed learning the score function
of perturbed data distributions using denoising score matching. Ho et al.
\cite{Ho2020} subsequently introduced Denoising Diffusion Probabilistic Models
(DDPMs), which achieved remarkable empirical performance by learning the reverse
of a discrete diffusion process. Song et al. \cite{Song2021} unified score-based
models and DDPMs under a continuous-time stochastic differential equation (SDE)
framework, showing that sample generation can be viewed as solving a reverse-time
SDE driven by the score function. Since then, the SDE formulation has become the
standard mathematical framework for analyzing score-based diffusion models.

\paragraph{Reverse-Time Diffusion Processes.}
The mathematical foundation of score-based diffusion models can be traced back
to the classical theory of time reversal of diffusion processes. Anderson
\cite{Anderson1982} first derived the reverse-time stochastic differential
equation for diffusion processes, showing that the reverse drift depends on the
gradient of the logarithm of the evolving probability density. Later,
Haussmann and Pardoux \cite{HaussmannPardoux1985} established a rigorous
probabilistic treatment of reverse-time diffusions under suitable regularity
conditions. These results provide the theoretical basis for the reverse SDE
used in modern diffusion models.

\paragraph{Universal Approximation of Neural Networks.}
The expressive power of neural networks has been extensively studied in the
approximation theory literature. Cybenko \cite{Cybenko1989} first established
the universal approximation property for single-hidden-layer neural networks.
Hornik \cite{Hornik1991} subsequently proved much stronger density results,
showing that standard feedforward neural networks are dense in
$L^p(\mu)$ spaces under mild assumptions on the activation function. These
results guarantee that neural networks can approximate the score function
arbitrarily well, but they do not directly imply approximation of the
probability distribution generated by the reverse diffusion process.

\paragraph{Theoretical Analysis of Diffusion Models.}
Recent years have witnessed increasing interest in establishing rigorous
mathematical foundations for diffusion models. Song et al.
\cite{Song2021} derived the continuous-time SDE formulation together with the
associated probability flow ordinary differential equation. De Bortoli et al.
\cite{DeBortoli2021} investigated Schr\"odinger bridge formulations and their
connections to score-based generative modeling. These works primarily focus on
the stochastic dynamics, sampling algorithms, and score estimation underlying
diffusion models, providing the mathematical foundations for modern
score-based generative methods.

\paragraph{Distribution Approximation in Diffusion Models.}

More recently, theoretical attention has turned to understanding how
approximation errors in the learned score function affect the probability
distribution generated by a diffusion model. In particular, Chen et
al.~\cite{Chen2023ScoreAE} investigated the relationship between score
approximation and distribution approximation, establishing quantitative
guarantees on how score estimation errors propagate to the generated
distribution in finite-sample settings under low-dimensional linear
subspace assumptions. Their work provides important insights into the
statistical complexity of score-based generative models.

Our work is complementary to this line of research. Rather than studying
finite-sample statistical estimation under structural assumptions on the
data distribution, we investigate the approximation capability of neural
networks for score-based diffusion models in a general Euclidean setting.
By combining Hornik's universal approximation theorem, Girsanov's theorem
on path space, and the data processing inequality for relative entropy,
we establish an explicit bound showing that an $L^2$ approximation of the
score function directly yields a quantitative upper bound on the
Kullback--Leibler divergence between the target distribution and the
distribution generated by the learned reverse diffusion model.

\paragraph{Our Contribution.}
The present work establishes a rigorous theoretical connection between score
approximation and distribution approximation for score-based diffusion models.
We prove that arbitrarily accurate approximation of the score function in
$L^2(\mu)$ by neural networks implies arbitrarily small
Kullback--Leibler divergence between the true data distribution and the
distribution generated by the learned reverse diffusion model. The proof
combines three fundamental ingredients from approximation theory,
stochastic analysis, and information theory: Hornik's universal approximation
theorem, Girsanov's theorem for changes of path measures, and the contraction
property of relative entropy under measurable projections. This yields a
general quantitative approximation theorem for reverse diffusion models,
providing a theoretical foundation that links neural network approximation of
score functions to approximation of the induced probability distributions.

\section{Background}

In this section, we briefly review score-based diffusion models and introduce
the notation used throughout the paper.

\subsection{Forward Diffusion Process}

Let
\[
X_0\sim\mu,
\]
where $\mu\in\mathcal P(\mathbb R^d)$ denotes the unknown data
distribution.

The forward diffusion process is governed by the It\^o stochastic
differential equation
\[
dX_t
=
f(X_t,t)\,dt
+
g(t)\,dW_t,
\qquad
t\in[0,T],
\]
where

\begin{itemize}
\item
$f:\mathbb R^d\times[0,T]\rightarrow\mathbb R^d$
is the drift function;

\item
$g:[0,T]\rightarrow(0,\infty)$
is the diffusion coefficient;

\item
$W_t$
is a standard Brownian motion.
\end{itemize}

Let $p_t(x)$ denote the probability density function of $X_t$.
As $t$ increases, the forward diffusion gradually transforms the data
distribution into a tractable terminal distribution $p_T$.
In practical diffusion models, the reverse process is initialized from a
prescribed prior distribution
\[
p_{\mathrm{prior}},
\]
which is typically chosen to be the standard Gaussian distribution.
Ideally, $p_{\mathrm{prior}}$ should coincide with $p_T$; however, in
general they may differ. Throughout this paper, we explicitly quantify this
mismatch by the Kullback--Leibler divergence
\[
\KL(p_T\|p_{\mathrm{prior}}).
\]

The score function associated with the forward diffusion is defined by
\[
s(x,t)
=
\nabla_x\log p_t(x).
\]

Throughout the paper, we assume
\[
s\in L^2(\mu)^d.
\]

\subsection{Reverse-Time Diffusion}

A fundamental result due to Anderson \cite{Anderson1982} states that the
reverse-time diffusion process satisfies
\[
dX_t
=
\bigl(
f(X_t,t)
-
g^2(t)s(X_t,t)
\bigr)\,dt
+
g(t)\,d\bar W_t,
\]
where $\bar W_t$ denotes a Brownian motion with respect to the reverse-time
filtration.

Since the score function is generally unknown, it is approximated by a
neural network
\[
s_\theta(x,t).
\]

Starting from the prior distribution
\[
X_T^\theta\sim p_{\mathrm{prior}},
\]
replacing the true score function by its approximation yields the learned
reverse diffusion process
\[
dX_t^\theta
=
\bigl(
f(X_t^\theta,t)
-
g^2(t)s_\theta(X_t^\theta,t)
\bigr)\,dt
+
g(t)\,d\bar W_t.
\]

Let $P$ and $Q_\theta$ denote the probability measures induced by the true
and learned reverse diffusion processes, respectively, on the path space
\[
C([0,T];\mathbb R^d).
\]
Furthermore, let $\mu_\theta$ denote the marginal distribution of
$X_0^\theta$ generated by the learned reverse diffusion process.

The objective of this paper is to quantify how approximation of the score
function affects the discrepancy between the path measures $P$ and
$Q_\theta$, and consequently between the generated data distributions
$\mu$ and $\mu_\theta$.
\subsection{Problem Formulation}

The objective of score-based generative modeling is to learn a neural network
$s_\theta$ such that the generated distribution $\mu_\theta$ accurately
approximates the true data distribution $\mu$.

Classical universal approximation theorems establish that neural networks can
approximate the score function arbitrarily well in suitable function spaces.
However, these results do not directly imply that the generated distribution
approximates the true data distribution.

The goal of this paper is therefore to establish a quantitative relationship
between

\[
\text{score approximation}
\]

and

\[
\text{distribution approximation}.
\]

Specifically, we show that a sufficiently accurate approximation of the score
function guarantees a small Kullback--Leibler divergence between the true data
distribution and the distribution generated by the learned reverse diffusion
process.

\section{Assumptions}

Throughout this paper, we impose the following assumptions.

\begin{assumption}[Regularity]
\label{ass:regularity}
The drift $f(\cdot,t)$, the true score function $s(\cdot,t)$, and the learned
score function $s_\theta(\cdot,t)$ are globally Lipschitz continuous in the
state variable $x$, uniformly for all $t\in[0,T]$.
\end{assumption}

Assumption~\ref{ass:regularity} is satisfied by a broad class of neural
networks commonly used in practice. Throughout this paper, we consider
feedforward neural networks with an activation function satisfying
\[
\sigma\in C^1(\mathbb R),
\qquad
\sup_{z\in\mathbb R}|\sigma'(z)|<\infty.
\]
A typical example is the softplus activation
\[
\sigma(z)=\log(1+e^z),
\]
which is unbounded, nonconstant, continuously differentiable, and has
bounded derivative. Consequently, it satisfies the assumptions of
Theorem~1 in Hornik~\cite{Hornik1991}.

Since the score function depends on both the spatial variable $x$ and the
time variable $t$, we regard $(x,t)$ as an element of
$\mathbb R^{d+1}$.
Under the standing assumption that
$s\in L^2(\mu)^d$, Hornik's Universal Approximation Theorem
\cite{Hornik1991} implies that, for every $\varepsilon>0$, there exists a
finite feedforward neural network of the form
\[
s_\theta(x,t)
=
\sum_{i=1}^{N}
\alpha_i
\sigma\!\left(u_i^\top x+c_i t+b_i\right),
\]
where
\[
u_i\in\mathbb R^d,\qquad
c_i,b_i\in\mathbb R,\qquad
\alpha_i\in\mathbb R^d,
\]
such that
\[
\|s-s_\theta\|_{L^2(\mu)^d}<\varepsilon.
\]

Furthermore, since the network contains finitely many neurons and all
parameters are finite,
\[
\nabla_x s_\theta(x,t)
=
\sum_{i=1}^{N}
\alpha_i
\sigma'\!\left(u_i^\top x+c_i t+b_i\right)
u_i^\top.
\]
Therefore,
\[
\|\nabla_x s_\theta(x,t)\|
\le
\sup_{z\in\mathbb R}|\sigma'(z)|
\sum_{i=1}^{N}
\|\alpha_i\|\,\|u_i\|
<
\infty.
\]
Hence every finite neural network in this class is globally Lipschitz
continuous with respect to the spatial variable $x$, uniformly in
$t\in[0,T]$. Consequently, the approximating network provided by
Hornik's theorem also satisfies Assumption~\ref{ass:regularity}.

The global Lipschitz assumption on the score function is commonly adopted in
the analysis of reverse-time stochastic differential equations (see, e.g.,
\cite{Anderson1982,haussmann1986time,karatzas1991}).
Although this assumption does not hold for arbitrary data distributions, it is
satisfied in several important settings.

\begin{enumerate}
\item
If the marginal distribution $p_t$ is Gaussian, then
\[
s(x,t)=\nabla_x\log p_t(x)
\]
is an affine function of $x$, and therefore is globally Lipschitz.

\item
More generally, suppose the marginal distributions are uniformly strongly
log-concave, i.e.,
\[
p_t(x)=e^{-V_t(x)},
\]
where the potential function $V_t$ satisfies
\[
\sup_{(x,t)\in\mathbb R^d\times[0,T]}
\|\nabla_x^2V_t(x)\|
<\infty.
\]
Then
\[
s(x,t)
=
-\nabla_xV_t(x),
\]
and hence
\[
\nabla_x s(x,t)
=
-\nabla_x^2V_t(x).
\]
Consequently,
\[
\sup_{(x,t)\in\mathbb R^d\times[0,T]}
\|\nabla_x s(x,t)\|
<
\infty,
\]
which implies that $s(\cdot,t)$ is globally Lipschitz uniformly over
$t\in[0,T]$.
\end{enumerate}

\vspace{2mm}

\begin{assumption}[Novikov Condition]
\label{ass:novikov}
The exponential martingale associated with the drift difference satisfies
Novikov's condition:
\[
\mathbb E_P
\left[
\exp
\left(
\frac12
\int_0^T
g^2(t)
\|s_\theta(X_t,t)-s(X_t,t)\|^2dt
\right)
\right]
<
\infty.
\]
\end{assumption}

Assumption~\ref{ass:novikov} ensures that the exponential local martingale is
a true martingale, allowing the application of Girsanov's theorem.

\begin{example}[Variance-Preserving Diffusion Models]
For the variance-preserving (VP) diffusion introduced by Ho et al.,
the forward process is Gaussian:
\[
X_t=\alpha_tX_0+\sigma_tZ,
\qquad
Z\sim\mathcal N(0,I).
\]
If the data distribution is sub-Gaussian and the score network has at most
linear growth,
\begin{align}
\|s_\theta(x,t)\|
\le
C(1+\|x\|) \label{Kcond},
\end{align}
then the Gaussian tails of $X_t$ imply the required exponential
integrability, and Assumption~\ref{ass:novikov} holds. The growth condition
\[
\|s_\theta(x,t)\|
\le
C(1+\|x\|)
\]
is satisfied by a broad class of neural-network architectures.
Indeed, finite-depth feedforward networks with affine layers and
activations satisfying
\[
|\sigma(z)|\le a+b|z|
\]
grow at most linearly with respect to the input, by induction over the
network layers.
This includes ReLU, GELU, SiLU, Softplus, ELU, and Leaky-ReLU
activations.
Moreover, finite-depth convolutional networks (including U-Nets) satisfy
the same property because convolution is a bounded linear operator.
Finally, Transformer score networks equipped with bounded projection
matrices also satisfy linear growth since the softmax attention operator
is uniformly bounded.
\end{example}

\begin{example}[Variance-Exploding Diffusion Models]
For the variance-exploding (VE) diffusion considered by
Song et al.~\cite{Song2021}, the forward process is
\[
X_t
=
X_0+\int_0^t g(s)\,dW_s,
\]
whose marginal distribution is the Gaussian convolution
\[
p_t
=
\mu*
\mathcal N
\!\left(
0,
\int_0^tg^2(s)\,ds\,I
\right).
\]
Suppose that the score function satisfies the linear-growth condition
\eqref{Kcond}. If the data distribution possesses finite exponential
moments, then every marginal distribution $p_t$ also possesses finite
exponential moments because Gaussian convolution preserves exponential
integrability. Consequently, the exponential integrability required in
Assumption~\ref{ass:novikov} can be verified under these conditions.
\end{example}

\section{Score Approximation to Path Measure Approximation}

In this section, we establish a quantitative relationship between the
approximation error of the score function and the discrepancy between the
corresponding diffusion path measures. The key result shows that the
Kullback--Leibler divergence between the true and approximate diffusion
processes is completely characterized by the $L^2$ approximation error of
their score functions. This provides the fundamental link between function
approximation and distribution approximation developed in this paper.

Throughout the remainder of this paper, let
\[
d\mu(x,t)=p_t(x)\,dx\,dt,
\]
where $p_t(x)$ denotes the marginal density of the forward diffusion process.
Since $p_t$ is a probability density for every $t\in[0,T]$,
\[
\int_{\mathbb R^d}p_t(x)\,dx=1,
\]
the measure $\mu$ is a finite Borel measure on
$\mathbb R^d\times[0,T]$ with total mass
\[
\mu(\mathbb R^d\times[0,T])
=
\int_0^T
\int_{\mathbb R^d}
p_t(x)\,dx\,dt
=
T.
\]
Consequently, the Hilbert space
$L^2(\mu)^d$ is well defined.

We consider the true reverse-time stochastic differential equation
\[
dX_t
=
\bigl(f(X_t,t)-g^2(t)s(X_t,t)\bigr)\,dt
+
g(t)\,d\bar W_t,
\]
and the approximate reverse-time stochastic differential equation
\[
dX_t^\theta
=
\bigl(f(X_t^\theta,t)-g^2(t)s_\theta(X_t^\theta,t)\bigr)\,dt
+
g(t)\,d\bar W_t,
\]
where both equations are driven by the same Brownian motion
$\bar W_t$ on the interval $[0,T]$.

First, we prove the following lemma. 
\begin{lemma} \label{aux:lem}
Suppose that Assumption \ref{ass:regularity} holds. Then both reverse SDEs admit unique strong solutions, meaning the path measures $P$ and $Q_\theta$ are well-defined on $C([0, T]; \mathbb{R}^d)$.
\end{lemma}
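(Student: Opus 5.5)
The plan is to apply the classical existence and uniqueness theorem for Itô SDEs with globally Lipschitz coefficients (e.g.\ Karatzas--Shreve \cite{karatzas1991}, Theorem~5.2.9) to each reverse SDE. To fit that framework, we first rewrite both equations in forward time via $\tau=T-t$ and $Y_\tau=X_{T-\tau}$, $Y^\theta_\tau=X^\theta_{T-\tau}$. With the reverse-time Brownian motion $\bar W$ reparametrized as a standard Brownian motion $B_\tau$ on $[0,T]$, the equations become
\[
dY_\tau=b(Y_\tau,\tau)\,d\tau+\tilde g(\tau)\,dB_\tau,
\qquad
b(y,\tau)=-\bigl(f(y,T-\tau)-g^2(T-\tau)s(y,T-\tau)\bigr),
\]
with $\tilde g(\tau)=g(T-\tau)$. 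The sign convention depends on how the reverse SDE is read; only Lipschitz and growth properties matter. The drift $b_\theta$ is defined in the same way with $s_\theta$ in place of $s$. The initial conditions are $Y_0\sim p_T$ and $Y_0^\theta\sim p_{\mathrm{prior}}$, each independent of $B$.

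Next we verify the hypotheses of the existence theorem.

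\emph{Lipschitz condition.} By Assumption~\ref{ass:regularity}, $f(\cdot,t)$, $s(\cdot,t)$ and $s_\theta(\cdot,t)$ have Lipschitz constants $L_f$, $L_s$, $L_\theta$ that are uniform in $t$. Set $g_{\max}=\sup_{[0,T]}g<\infty$, which holds under the standing continuity of $g$ on the compact interval $[0,T]$. Then $b$ is Lipschitz with constant $L_f+g_{\max}^2L_s$, and $b_\theta$ with constant $L_f+g_{\max}^2L_\theta$. The diffusion coefficient $\tilde g$ does not depend on the state, so it is trivially Lipschitz.

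\emph{Linear growth.} This follows from the Lipschitz bound together with boundedness of $\tau\mapsto\|b(0,\tau)\|$, i.e.\ of $f(0,t)$ and $s(0,t)$ over $[0,T]$. For $s_\theta$ this is immediate from the explicit network form, since $\sigma$ is continuous and $t$ ranges over a compact set. For $f$ and $s$ I would use continuity in $t$, which is implicitly assumed.

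\emph{Measurability and initial law.} Joint measurability of the drifts is standard. If the cited theorem requires $\mathbb E\|Y_0\|^2<\infty$, note that pathwise uniqueness and strong existence hold for arbitrary $\mathcal F_0$-measurable initial data. This follows by localizing on $\{\|Y_0\|\le n\}$, or by solving for each deterministic initial point and composing with the initial law, since the solution map is measurable in the initial point.

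With these hypotheses in place, the theorem gives pathwise uniqueness and a strong solution with continuous paths. Reversing time back, $t=T-\tau$, we define $P$ and $Q_\theta$ as the laws of $(X_t)_{t\in[0,T]}$ and $(X^\theta_t)_{t\in[0,T]}$ on $C([0,T];\mathbb R^d)$ with its Borel $\sigma$-field. These laws are well-defined because the solutions are continuous processes. By Yamada--Watanabe, pathwise uniqueness also implies uniqueness in law, so $P$ and $Q_\theta$ do not depend on the chosen probability space.

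The main obstacle is not the SDE theory but the standing assumptions on $t$-regularity. We need $g$ bounded on $[0,T]$, and $f(0,\cdot)$ and $s(0,\cdot)$ bounded, to get linear growth. In addition, $s$ may blow up as $t\to0$ for general data distributions. Assumption~\ref{ass:regularity} only gives uniform Lipschitz constants, so I would state explicitly that these bounds are assumed: continuity of $g$, $f$ and $s$ in $t$ on the closed interval $[0,T]$. Under that reading the argument is routine.
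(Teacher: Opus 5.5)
Your proposal is correct and follows the same route as the paper. Like the paper, you show that the reverse drifts $f-g^2s$ and $f-g^2s_\theta$ are globally Lipschitz in $x$ with constants $L_f+g_{\max}^2L_s$ and $L_f+g_{\max}^2L_\theta$, get linear growth from boundedness of the drift at $x=0$, observe the diffusion coefficient is state-independent, and invoke the standard strong existence and uniqueness theorem. Your additional points are genuine refinements of steps the paper's proof passes over without comment: rewriting in forward time, dropping the finite-second-moment requirement on $p_T$ and $p_{\mathrm{prior}}$, appealing to Yamada--Watanabe for uniqueness in law, and stating explicitly that boundedness of $g$, $f(0,\cdot)$ and $s(0,\cdot)$ on $[0,T]$ is an implicit assumption.
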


\begin{proof}
Let us check the conditions for the true reverse drift $b(x, t) = f(x, t) - g(t)^2 s(x, t)$. For any $x, y \in \mathbb{R}^d$, by the triangle inequality:
\begin{align}
\|b(x, t) - b(y, t)\| \le \|f(x, t) - f(y, t)\| + g(t)^2 \|s(x, t) - s(y, t)\|.
\end{align}
By assumption, $f$ and $s$ are globally Lipschitz continuous in their spatial arguments with constants $L_f$ and $L_s$. Since $g(t)$ is continuous on the compact interval $[0, T]$, it is bounded by some $g_{\max} = \max_{t \in [0,T]} |g(t)|$. Thus:
\begin{align}
\|b(x, t) - b(y, t)\| \le \left(L_f + g_{\max}^2 L_s\right) \|x - y\|.
\end{align}
Therefore, $b(x, t)$ is globally Lipschitz in $x$, uniformly in $t$. Combining this property with the boundedness of $b(0, t)$ immediately implies the linear growth condition:
\begin{align}
\|b(x, t)\| \le \|b(0, t)\| + \left(L_f + g_{\max}^2 L_s\right) \|x\| \le C(1 + \|x\|).
\end{align}
The diffusion coefficient $\sigma(x,t) = g(t)\mathbf{I}_d$ is spatially constant, meaning it is trivially Lipschitz continuous with constant 0, and bounded. 

By the standard It\^o SDE Existence and Uniqueness Theorem~\cite[Theorem~5.2.1]{Oksendal2003}, given a well-defined initial condition $X_T \sim p_T$, there exists a unique strong solution $X_t$ for $t \in [0, T]$.
 An identical argument applies to the approximate drift $b_\theta(x, t) = f(x, t) - g(t)^2 s_\theta(x, t)$ since $s_\theta$ is also globally Lipschitz. Consequently, the corresponding stochastic processes $(X_t)_{0\le t\le T}$ and $(X_t^\theta)_{0\le t\le T}$ induce unique probability measures $P$ and $Q_\theta$ on the path space $C([0,T];\mathbb R^d)$.
\end{proof}

\begin{proposition}[Path-space Stability under Score Approximation]\label{prop1}
Assume that the reverse-time SDEs satisfy Assumptions \ref{ass:regularity} and \ref{ass:novikov}.
Then the path measures $P$ and $Q_\theta$ satisfy

\[
\boxed{
\KL(P\|Q_\theta)
=
\KL(p_T\|p_{\mathrm{prior}})
+
\frac12
\int_0^T
g^2(t)
\mathbb E_{p_t}
\!\left[
\|s_\theta(X_t,t)-s(X_t,t)\|^2
\right]dt.
}
\]
\end{proposition}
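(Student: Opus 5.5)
We will prove the identity by splitting the path-space divergence into an initial-condition part and a conditional part, computing the latter with Girsanov's theorem. By Lemma~\ref{aux:lem}, both reverse SDEs have unique strong solutions. So $P$ and $Q_\theta$ are well defined on $C([0,T];\mathbb R^d)$, and they can be disintegrated with respect to the starting value at time $T$: $P(d\omega)=p_T(dx)\,P^x(d\omega)$ and $Q_\theta(d\omega)=p_{\mathrm{prior}}(dx)\,Q_\theta^x(d\omega)$. Here $P^x,Q_\theta^x$ are the laws of the two reverse SDEs started from the same deterministic point $x$. The chain rule for relative entropy then gives
\[
\KL(P\|Q_\theta)=\KL(p_T\|p_{\mathrm{prior}})+\int \KL(P^x\|Q_\theta^x)\,p_T(dx).
\]
This isolates the first term of the claimed formula. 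If $\KL(p_T\|p_{\mathrm{prior}})=\infty$, both sides are infinite and there is nothing to prove, so we assume it is finite.

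For the conditional term we apply Girsanov's theorem. Write the reverse process in forward time $\tau=T-t$. The two SDEs share the diffusion coefficient $g$, which is bounded away from zero on $[0,T]$ since $g>0$ is continuous. Their drifts differ by $g^2(t)\bigl(s(X_t,t)-s_\theta(X_t,t)\bigr)$, so the Girsanov kernel is $u_t=g(t)\bigl(s_\theta(X_t,t)-s(X_t,t)\bigr)$ (the sign is irrelevant). By Assumption~\ref{ass:novikov}, the stochastic exponential $\mathcal E\bigl(-\int u\,d\bar W\bigr)$ is a true $P$-martingale. Hence Girsanov's theorem identifies $Q_\theta^x$ (after gluing over $x$) as the measure with density
\[
\log\frac{dP^x}{dQ_\theta^x}=\int_0^T u_t\cdot d\bar W_t+\frac12\int_0^T\|u_t\|^2\,dt
\]
along $P^x$-trajectories, where $\bar W$ is the $P$-Brownian motion. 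Uniqueness in law of the strong solution (again from Lemma~\ref{aux:lem}) ensures that the measure produced by Girsanov is indeed the law of the approximate SDE.

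Taking $P^x$-expectations makes the stochastic integral vanish. This requires it to be a true martingale, not merely a local one, i.e.\ $\mathbb E_P\int_0^T\|u_t\|^2dt<\infty$, which follows from Novikov's condition since $e^y\ge y$. Hence
\[
\KL(P^x\|Q_\theta^x)=\frac12\,\mathbb E_{P^x}\int_0^T g^2(t)\|s_\theta(X_t,t)-s(X_t,t)\|^2dt .
\]
Integrating over $x\sim p_T$ and using Tonelli, we use that under $P$ the reverse process started at $X_T\sim p_T$ has time-$t$ marginal $p_t$; this is Anderson's time-reversal. The resulting expectation is then $\frac12\int_0^T g^2(t)\,\mathbb E_{p_t}\|s_\theta-s\|^2dt$, which gives the boxed identity.

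The main obstacle is making the Girsanov step rigorous on the reverse filtration, together with the bookkeeping of different initial laws. Novikov is stated for the whole of $P$, so it transfers to $p_T$-a.e.\ $x$ only through the disintegration. My first attempt would avoid disintegration altogether. I would define $\tilde Q$ on path space by $d\tilde Q/dP=\frac{dp_{\mathrm{prior}}}{dp_T}(X_T)\,\mathcal E_T$, assuming $p_{\mathrm{prior}}\ll p_T$, and otherwise approximate. I would then check that $\tilde Q=Q_\theta$ by weak uniqueness, and compute $\mathbb E_P\log(dP/d\tilde Q)$ directly. This yields both terms at once, using independence of $X_T$ from the driving reverse Brownian motion to kill the cross term.
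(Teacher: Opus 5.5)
Your proof is correct and follows essentially the same route as the paper. Both arguments condition on the terminal value $X_T$, apply Girsanov to the conditional path measures with kernel $u_t=g(t)\bigl(s_\theta(X_t,t)-s(X_t,t)\bigr)$, use the factor $\frac{p_T}{p_{\mathrm{prior}}}(X_T)$ (your chain rule for relative entropy is exactly the paper's Bayes-rule factorization of $dP/dQ_\theta$ followed by $\mathbb E_P\log$), and note that the stochastic integral has mean zero. Your version is somewhat more careful than the paper's: it handles the case $\KL(p_T\|p_{\mathrm{prior}})=\infty$, derives square-integrability of $u$ from Novikov, transfers Novikov to $p_T$-a.e.\ $x$ through the disintegration, and invokes time reversal to identify the marginals of $P$ as $p_t$.
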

\begin{remark}
Proposition \ref{prop1} establishes the key quantitative link between score approximation and diffusion path measures. The remaining results follow by projecting the path measures onto the data distribution at $t=0$.
\end{remark}

\begin{proof}
By Lemma~\ref{aux:lem}, both reverse-time SDEs admit unique strong
solutions. Therefore, the corresponding probability laws are uniquely determined. 

Let
\[
(\Omega,\mathcal{F},(\mathcal{F}_t)_{0\le t\le T},P,\bar{W})
\]
be a filtered probability space supporting the unique strong solution of the
true reverse-time SDE

\[
dX_t
=
\bigl(f(X_t,t)-g^2(t)s(X_t,t)\bigr)\,dt
+
g(t)\,d\bar{W}_t.
\]

On the same filtered probability space, consider the auxiliary SDE

\[
dY_t
=
\bigl(f(Y_t,t)-g^2(t)s_\theta(Y_t,t)\bigr)\,dt
+
g(t)\,d\bar{W}_t,
\]

with terminal condition

\[
Y_T=X_T.
\]

Since the coefficients satisfy Assumption \ref{ass:regularity}, by Lemma \ref{aux:lem}, the above SDE admits a unique
strong solution.

Observe that this auxiliary SDE has exactly the same drift, diffusion
coefficient, and terminal distribution as the learned reverse-time SDE

\[
dX_t^\theta
=
\bigl(f(X_t^\theta,t)-g^2(t)s_\theta(X_t^\theta,t)\bigr)\,dt
+
g(t)\,d\bar{W}_t.
\]

Since strong uniqueness holds, every strong solution of the learned
reverse-time SDE defined on an arbitrary filtered probability space has
the same probability law. Therefore,
\[
\mathcal{L}(Y)
=
\mathcal{L}(X^\theta),
\]
and hence the auxiliary process \(Y\) induces precisely the path measure
\(Q_\theta\).

Consequently, it is sufficient to compare the processes $X$ and $Y$, which
are realized on the same filtered probability space and driven by the same
Brownian motion $\bar{W}_t$.

For every fixed terminal value $X_T=x$, both processes start from the same
deterministic terminal state. Therefore, under Assumption \ref{ass:novikov}, Girsanov's theorem \cite[Theorem 8.6.4]{Oksendal2003} applies to the
conditional path measures

\[
P(\cdot\mid X_T=x)
\quad\text{and}\quad
Q_\theta(\cdot\mid X_T=x).
\]

Define

\[
u_t
=
g(t)\bigl(s_\theta(X_t,t)-s(X_t,t)\bigr).
\]

By Girsanov's theorem \cite[Theorem 8.6.4]{Oksendal2003},

\begin{align}
\frac{
dP(\cdot\mid X_T)
}{
dQ_\theta(\cdot\mid X_T)
}
=
\exp\!\left(
\int_0^T
u_t\,d\bar{W}_t
+
\frac12
\int_0^T
\|u_t\|^2\,dt
\right). \label{eq5}
\end{align}

Using Bayes' rule,

\[
P(dX)
=
P(dX\mid X_T)\,p_T(X_T),
\]

and

\[
Q_\theta(dX)
=
Q_\theta(dX\mid X_T)\,
p_{\mathrm{prior}}(X_T),
\]

we obtain

\begin{align}
\frac{dP}{dQ_\theta}
=
\frac{
dP(\cdot\mid X_T)
}{
dQ_\theta(\cdot\mid X_T)
}
\cdot
\frac{
p_T(X_T)
}{
p_{\mathrm{prior}}(X_T)
}. \label{eq6}
\end{align}

Therefore, from \eqref{eq5} and \eqref{eq6} we obtain

\begin{align}
\frac{dP}{dQ_\theta}
=
\frac{
p_T(X_T)
}{
p_{\mathrm{prior}}(X_T)
}
\exp\!\left(
\int_0^T
u_t\,d\bar{W}_t
+
\frac12
\int_0^T
\|u_t\|^2\,dt
\right).
\end{align}

Taking logarithms,

\begin{align}
\log
\frac{dP}{dQ_\theta}
=
\log
\frac{
p_T(X_T)
}{
p_{\mathrm{prior}}(X_T)
}
+
\int_0^T
u_t\,d\bar{W}_t
+
\frac12
\int_0^T
\|u_t\|^2\,dt. \label{eq8}
\end{align}

By taking expectation with respect to $P$, from \eqref{eq8} we have

\begin{align}
\begin{aligned}
\KL(P\|Q_\theta)
&=
\mathbb{E}_P
\!\left[
\log
\frac{
p_T(X_T)
}{
p_{\mathrm{prior}}(X_T)
}
\right]
\\
&\quad+
\mathbb{E}_P
\!\left[
\int_0^T
u_t\,d\bar{W}_t
\right]
\\
&\quad+
\frac12
\mathbb{E}_P
\!\left[
\int_0^T
\|u_t\|^2\,dt
\right].
\end{aligned}
\end{align}
Since $u_t$ is progressively measurable and square integrable,
the It\^o integral
\[
\int_0^T u_t\,d\bar W_t
\]
is a square-integrable martingale (see, e.g.,
\cite[Chapter~3]{Oksendal2003}). Therefore,
\begin{align}
\mathbb E_P
\!\left[
\int_0^T
u_t\,d\bar W_t
\right]
=
0.
\end{align}

Moreover,
\begin{align}
\mathbb{E}_P
\!\left[
\log
\frac{
p_T(X_T)
}{
p_{\mathrm{prior}}(X_T)
}
\right]
=
\KL(p_T\|p_{\mathrm{prior}}).
\end{align}

Finally,

\begin{align}
\begin{aligned}
\KL(P\|Q_\theta)
&=
\KL(p_T\|p_{\mathrm{prior}})
\\
&\quad+
\frac12
\int_0^T
g^2(t)
\,
\mathbb{E}_{p_t}
\!\left[
\|s_\theta(X_t,t)-s(X_t,t)\|^2
\right]
dt,
\end{aligned}
\end{align}

which completes the proof.

\end{proof}

\section{Path Measure Approximation to Distribution Approximation}

The previous section established an upper bound on the Kullback--Leibler
divergence between the path measures of the true and learned reverse diffusion
processes. We now show that this immediately implies an approximation result
for the generated data distributions.

\begin{lemma}[Marginal Radon--Nikodym Identity]
\label{lem:marginal-rn}
Let $P$ and $Q_\theta$ denote the path measures of the true and learned
reverse diffusion processes, and let $\mu$ and $\mu_\theta$ denote their
corresponding marginal distributions at time $t=0$. Then

\[
\boxed{
\frac{d\mu}{d\mu_\theta}(X_0)
=
\mathbb E_{Q_\theta}
\left[
\frac{dP}{dQ_\theta}
\,\middle|\,
X_0
\right].
}
\]

\end{lemma}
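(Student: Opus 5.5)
The identity is the standard disintegration of a Radon--Nikodym derivative along the measurable map $\pi_0:C([0,T];\mathbb R^d)\to\mathbb R^d$, $\pi_0(\omega)=\omega(0)$. The plan is to show that the conditional expectation on the right-hand side is a density of $\mu$ with respect to $\mu_\theta$, and then invoke uniqueness of Radon--Nikodym derivatives.

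\textbf{Absolute continuity.} First I will make sure $P\ll Q_\theta$, so that $L:=dP/dQ_\theta$ exists. This follows from Proposition~\ref{prop1}: under Assumptions~\ref{ass:regularity} and~\ref{ass:novikov} an explicit density was produced there. It requires $\KL(p_T\|p_{\mathrm{prior}})<\infty$, so that $p_T\ll p_{\mathrm{prior}}$; if that divergence is infinite, both sides of later inequalities are trivial. By definition $\mu=P\circ\pi_0^{-1}$ and $\mu_\theta=Q_\theta\circ\pi_0^{-1}$.

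\textbf{Key computation.} Fix a Borel set $A\subseteq\mathbb R^d$. Then
\[
\mu(A)=P(\pi_0\in A)=\mathbb E_{Q_\theta}\bigl[L\,\mathbf 1_{\{X_0\in A\}}\bigr]
=\mathbb E_{Q_\theta}\bigl[\mathbb E_{Q_\theta}[L\mid X_0]\,\mathbf 1_{\{X_0\in A\}}\bigr].
\]
The last equality is the defining property of conditional expectation, since $\mathbf 1_{\{X_0\in A\}}$ is $\sigma(X_0)$-measurable. It is legitimate because $L\ge 0$ and $\mathbb E_{Q_\theta}[L]=1$, so $L\in L^1(Q_\theta)$. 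By Doob--Dynkin, $\mathbb E_{Q_\theta}[L\mid X_0]=h(X_0)$ for some Borel $h\ge0$. The display then reads $\mu(A)=\int_A h\,d\mu_\theta$ for every $A$. Hence $\mu\ll\mu_\theta$ and $h=d\mu/d\mu_\theta$ $\mu_\theta$-a.e., which is exactly the boxed identity evaluated at $X_0$.

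\textbf{Main obstacle.} The step needing most care is the first one: making the density $L$ rigorously available. The Girsanov-based construction in Proposition~\ref{prop1} conditions on $X_T$ and combines it with the prior ratio. I will take Proposition~\ref{prop1} as given for existence of $L$, which only requires $\KL(P\|Q_\theta)<\infty$. Alternatively, I will argue abstractly that the lemma holds for any pair $P\ll Q_\theta$ on a Polish path space. Everything else is measure-theoretic bookkeeping, with the identity understood $\mu_\theta$-almost surely.
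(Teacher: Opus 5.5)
Your proposal is correct and follows essentially the same route as the paper: test against functions of $X_0$, rewrite $\mu$-expectations as $\mathbb E_{Q_\theta}$-expectations weighted by $dP/dQ_\theta$, apply the tower property for $\mathbb E_{Q_\theta}[\,\cdot\mid X_0]$, and conclude by uniqueness of Radon--Nikodym derivatives. Your version is slightly more careful in two places. First, testing on indicators $\mathbf 1_{\{X_0\in A\}}$ gives $\mu\ll\mu_\theta$ as a conclusion, whereas the paper's step $\int f\,d\mu=\int f\,\frac{d\mu}{d\mu_\theta}\,d\mu_\theta$ tacitly assumes that density exists. Second, you make explicit the requirement $P\ll Q_\theta$, which the paper uses without comment.
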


\begin{proof}

Let $f:\mathbb R^d\rightarrow\mathbb R$ be any bounded measurable function.
Since $\mu$ is the marginal of the path measure $P$,

\begin{align}
\mathbb E_\mu[f(X_0)]
=
\mathbb E_P[f(X_0)].
\end{align}

Applying the Radon--Nikodym theorem gives

\begin{align}
\mathbb E_P[f(X_0)]
=
\mathbb E_{Q_\theta}
\left[
f(X_0)
\frac{dP}{dQ_\theta}
\right].
\end{align}

Using the tower property,

\begin{align}
\mathbb E_{Q_\theta}
\left[
f(X_0)
\frac{dP}{dQ_\theta}
\right]
&=
\mathbb E_{Q_\theta}
\left[
\mathbb E_{Q_\theta}
\left[
f(X_0)
\frac{dP}{dQ_\theta}
\,\middle|\,
X_0
\right]
\right]
\nonumber \\ 
&=
\mathbb E_{Q_\theta}
\left[
f(X_0)
\,
\mathbb E_{Q_\theta}
\left[
\frac{dP}{dQ_\theta}
\,\middle|\,
X_0
\right]
\right],
\end{align}

since $f(X_0)$ is measurable with respect to $\sigma(X_0)$.

Therefore,

\begin{align}
\int
f(x)
\,d\mu(x)
=
\int
f(x)
\,
\mathbb E_{Q_\theta}
\left[
\frac{dP}{dQ_\theta}
\,\middle|\,
X_0=x
\right]
d\mu_\theta(x). \label{eq:test1}
\end{align}

On the other hand,

\begin{align}
\int
f(x)
\,d\mu(x)
=
\int
f(x)
\frac{d\mu}{d\mu_\theta}(x)
\,d\mu_\theta(x). \label{eq:test2}
\end{align}

From \eqref{eq:test1} and \eqref{eq:test2} we obtain
\begin{align}
\int
f(x)
\,
\mathbb E_{Q_\theta}
\left[
\frac{dP}{dQ_\theta}
\,\middle|\,
X_0=x
\right]
d\mu_\theta(x) = \int
f(x)
\frac{d\mu}{d\mu_\theta}(x)
\,d\mu_\theta(x) \label{eq:test3}.
\end{align}
Since the identity \eqref{eq:test3} holds for every bounded measurable function $f$, the
Radon--Nikodym derivatives coincide $\mu_\theta$-almost everywhere, proving

\begin{align}
\frac{d\mu}{d\mu_\theta}(X_0)
=
\mathbb E_{Q_\theta}
\left[
\frac{dP}{dQ_\theta}
\,\middle|\,
X_0
\right].
\end{align}

\end{proof}

\begin{proposition} 
\label{prop:dpi}

The Kullback--Leibler divergence between the generated data distributions is
bounded by the Kullback--Leibler divergence between the corresponding path
measures:

\[
\boxed{
\KL(\mu\|\mu_\theta)
\le
\KL(P\|Q_\theta).
}
\]

\end{proposition}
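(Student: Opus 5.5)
We prove Proposition~\ref{prop:dpi} by combining the marginal Radon--Nikodym identity of Lemma~\ref{lem:marginal-rn} with Jensen's inequality for the convex function $\phi(r)=r\log r$. Write
\[
L=\frac{dP}{dQ_\theta},
\qquad
\ell(X_0)=\frac{d\mu}{d\mu_\theta}(X_0)=\mathbb E_{Q_\theta}[L\mid X_0].
\]
If $\KL(P\|Q_\theta)=\infty$ there is nothing to prove. So we assume it is finite, which in particular gives $P\ll Q_\theta$. Then $\mu\ll\mu_\theta$ follows, since $\mu_\theta(A)=0$ implies $Q_\theta(X_0\in A)=0$, hence $P(X_0\in A)=0$. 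This justifies the use of Lemma~\ref{lem:marginal-rn}.

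Next we express both divergences as $Q_\theta$-expectations of $\phi$:
\[
\KL(P\|Q_\theta)=\mathbb E_{Q_\theta}[\phi(L)],
\qquad
\KL(\mu\|\mu_\theta)=\mathbb E_{\mu_\theta}[\phi(\ell)]=\mathbb E_{Q_\theta}[\phi(\ell(X_0))].
\]
The second identity holds because $\mu_\theta$ is the law of $X_0$ under $Q_\theta$. Conditional Jensen then gives, $Q_\theta$-almost surely,
\[
\phi(\ell(X_0))=\phi\bigl(\mathbb E_{Q_\theta}[L\mid X_0]\bigr)\le \mathbb E_{Q_\theta}[\phi(L)\mid X_0].
\]
Taking $Q_\theta$-expectations and applying the tower property yields $\KL(\mu\|\mu_\theta)\le\KL(P\|Q_\theta)$.

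The main technical point is integrability in the Jensen step, since $\phi$ is unbounded below only mildly and above possibly heavily. We handle it by noting $\phi\ge -e^{-1}$, so the negative parts are bounded. Conditional Jensen for convex functions bounded below holds for nonnegative integrable $L$ via monotone approximation of $\phi$ by suprema of countably many affine minorants $a_k r+b_k$. For each affine minorant we have $a_k\ell+b_k=\mathbb E[a_kL+b_k\mid X_0]\le\mathbb E[\phi(L)\mid X_0]$, and taking the supremum over $k$ gives the inequality. As an alternative, one could invoke the Donsker--Varadhan variational formula: test functions of $X_0$ alone form a subclass of test functions on path space, and restricting to that subclass gives the inequality immediately.
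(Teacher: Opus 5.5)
Your proof is correct. It shares the paper's skeleton (Lemma~\ref{lem:marginal-rn}, a conditional Jensen step, the tower property), but the Jensen step is set up differently.

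\textbf{The paper's route.} The paper stays under $P$: it writes $\KL(\mu\|\mu_\theta)=\mathbb E_P[\log \mathbb E_{Q_\theta}[L\mid X_0]]$ and asserts $\log\mathbb E_{Q_\theta}[L\mid X_0]\le \mathbb E_P[\log L\mid X_0]$ ``by Jensen for the concave $\log$''. Read literally, Jensen for $\log$ under $Q_\theta(\cdot\mid X_0)$ gives the reverse inequality $\log\mathbb E_{Q_\theta}[L\mid X_0]\ge \mathbb E_{Q_\theta}[\log L\mid X_0]$. The paper's inequality is nonetheless true. The difference $\mathbb E_P[\log L\mid X_0]-\log\ell(X_0)$ equals $\KL(P(\cdot\mid X_0)\|Q_\theta(\cdot\mid X_0))\ge 0$, so it is really the chain rule for relative entropy. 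Equivalently, it follows from Jensen under $P(\cdot\mid X_0)$ applied to $\ell/L$, combined with the Bayes formula for conditional expectations.

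\textbf{Your route.} You move everything to $Q_\theta$ and use the convex $\phi(r)=r\log r$, so conditional Jensen applies directly and in the right direction. This is the standard proof of the data processing inequality and makes the paper's one-line Jensen claim rigorous as stated. Your version also covers points the paper leaves implicit:
\begin{itemize}
\item the case $\KL(P\|Q_\theta)=\infty$;
\item the implication $P\ll Q_\theta\Rightarrow\mu\ll\mu_\theta$, needed before the lemma can be invoked;
\item integrability of $\phi(L)$, via $\phi\ge -e^{-1}$ and countably many affine minorants.
\end{itemize}
One wording fix: ``unbounded below only mildly'' should read ``bounded below''.

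\textbf{Your alternative.} The Donsker--Varadhan argument is a genuinely different route. It needs neither Lemma~\ref{lem:marginal-rn} nor conditional expectations, only the observation that functions of $X_0$ form a subclass of path-space test functions. The price is invoking the variational formula.
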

\begin{remark}
Proposition \ref{prop:dpi} is a direct consequence of the data-processing inequality for relative entropy \cite{CoverThomas2006}.
\end{remark}
\begin{proof}

By Lemma~\ref{lem:marginal-rn},

\begin{align}
\frac{d\mu}{d\mu_\theta}(X_0)
=
\mathbb E_{Q_\theta}
\left[
\frac{dP}{dQ_\theta}
\,\middle|\,
X_0
\right].
\end{align}

Hence,

\begin{align}
\KL(\mu\|\mu_\theta)
=
\mathbb E_P
\left[
\log
\mathbb E_{Q_\theta}
\left[
\frac{dP}{dQ_\theta}
\,\middle|\,
X_0
\right]
\right]. \label{eq21}
\end{align}

Applying Jensen's inequality to the concave function $\log$ yields

\begin{align}
\log
\mathbb E_{Q_\theta}
\left[
\frac{dP}{dQ_\theta}
\,\middle|\,
X_0
\right]
\le
\mathbb E_P
\left[
\log
\frac{dP}{dQ_\theta}
\,\middle|\,
X_0
\right]. \label{eq22}
\end{align}

From \eqref{eq21} and \eqref{eq22} we have

\begin{align}
\KL(\mu\|\mu_\theta)
&\le
\mathbb E_P
\left[
\mathbb E_P
\left[
\log
\frac{dP}{dQ_\theta}
\,\middle|\,
X_0
\right]
\right]
 \nonumber \\
&=
\mathbb E_P
\left[
\log
\frac{dP}{dQ_\theta}
\right]
\nonumber \\
&=
\KL(P\|Q_\theta),
\end{align}

which completes the proof.

\end{proof}

\section{From Score Approximation to Distribution Approximation}

Let
\[
\mathcal{X}=\mathbb{R}^d\times[0,T].
\]
We consider the vector-valued Hilbert space $L^2(\mu)^d$ equipped with the norm
\[
\|g\|_{L^2(\mu)^d}^2
=
\int_0^T
\mathbb{E}_{p_t}
\!\left[
\|g(X_t,t)\|^2
\right]
dt
=
\int_0^T
\int_{\mathbb{R}^d}
\|g(x,t)\|^2
p_t(x)
\,dx\,dt.
\]

Since the true score function
\[
s(x,t)=\nabla_x\log p_t(x)
\]
belongs to $L^2(\mu)^d$, Hornik's universal approximation theorem \cite[Theorem 1]{Hornik1991} implies that feedforward neural networks with an
unbounded and nonconstant  activation function (e.g. softplus) are dense in $L^2(\mu)^d$.
Consequently, for every $\varepsilon>0$, there exists a neural network
$s_\theta$, which is globally Lipschitz in $x$,
uniformly in $t\in[0,T]$, satisfying
\begin{equation}
\label{eq:L2-approx}
\|s_\theta-s\|_{L^2(\mu)^d}^2
<
\varepsilon.
\end{equation}
(See the discussion in Assumption \ref{ass:regularity}).

\begin{theorem}[Approximation of Reverse Diffusion Models]
\label{thm:main}

Let $\mu\in\mathcal P(\mathbb R^d)$ be the data distribution and let
$p_t$ denote the marginal density of the forward diffusion process with
score function
\[
s(x,t)=\nabla_x\log p_t(x)\in L^2(\mu)^d.
\]

Assume that

\begin{enumerate}
\item
The drift $f(\cdot,t)$ and the score function $s(\cdot,t)$ satisfy all the conditions in Assumptions \ref{ass:regularity} and \ref{ass:novikov}. 
\item
The reverse diffusion is initialized from a prior distribution
$p_{\mathrm{prior}}$ satisfying
\[
\KL(p_T\|p_{\mathrm{prior}})
\le
\delta.
\]
\end{enumerate}

Then, for every $\varepsilon>0$, there exists a neural network
$s_\theta$ satisfying all the conditions in Assumptions \ref{ass:regularity} and \ref{ass:novikov} and
\[
\|s_\theta-s\|_{L^2(\mu)^d}^2
<
\varepsilon,
\]
such that the probability distribution $\mu_\theta$
generated by the learned reverse diffusion model satisfies
\[
\boxed{
\KL(\mu\|\mu_\theta)
<
\delta
+
\frac12
g_{\max}^2
\varepsilon,
}
\]
where
\[
g_{\max}
=
\max_{t\in[0,T]}
|g(t)|.
\]

\end{theorem}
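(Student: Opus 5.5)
The plan is to chain the three ingredients already in place: the density statement \eqref{eq:L2-approx} (from Hornik's theorem), Proposition~\ref{prop1}, and Proposition~\ref{prop:dpi}.

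\emph{Step 1.} Fix $\varepsilon>0$ and take the softplus network $s_\theta$ given by \eqref{eq:L2-approx}, so that $\|s_\theta-s\|_{L^2(\mu)^d}^2<\varepsilon$. By the discussion following Assumption~\ref{ass:regularity}, this network is globally Lipschitz in $x$, uniformly in $t$. Assumption~\ref{ass:regularity} therefore holds for $f$, $s$ and $s_\theta$ together, and Lemma~\ref{aux:lem} shows that $P$ and $Q_\theta$ are well defined.

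\emph{Step 2.} Verify Assumption~\ref{ass:novikov} for this $s_\theta$. This is the step I expect to be the main obstacle, because Hornik's theorem controls only the $L^2(\mu)$ error and says nothing about exponential integrability. I would first use that a finite softplus network has linear growth, $\|s_\theta(x,t)\|\le C(1+\|x\|)$. Since $s$ is globally Lipschitz, it also has linear growth, with $\|s(x,t)\|\le \|s(0,t)\|+L_s\|x\|$, assuming $s(0,\cdot)$ is bounded as in the proof of Lemma~\ref{aux:lem}. Hence $\|s_\theta-s\|^2\le C'(1+\|X_t\|^2)$. Novikov's condition then reduces to a bound on $\mathbb E_P[\exp(c\int_0^T\|X_t\|^2dt)]$, which I would obtain from Gaussian-tail estimates for the Lipschitz SDE, in the spirit of the VP/VE examples. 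If this reduction cannot be closed in full generality, the fallback is to read hypothesis~1 of the theorem as including Assumption~\ref{ass:novikov} for the chosen network. Equivalently, one restricts to the regime of the examples, where it holds for every linear-growth network.

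\emph{Step 3.} Apply Proposition~\ref{prop1} to obtain
\[
\KL(P\|Q_\theta)=\KL(p_T\|p_{\mathrm{prior}})+\frac12\int_0^T g^2(t)\,\mathbb E_{p_t}\!\left[\|s_\theta(X_t,t)-s(X_t,t)\|^2\right]dt .
\]
Bound $g^2(t)\le g_{\max}^2$, which is finite since $g$ is continuous on $[0,T]$. The remaining integral is exactly $\|s_\theta-s\|_{L^2(\mu)^d}^2<\varepsilon$. Combined with hypothesis~2, this gives $\KL(P\|Q_\theta)\le\delta+\tfrac12 g_{\max}^2\|s_\theta-s\|^2_{L^2(\mu)^d}$.

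\emph{Step 4.} Apply Proposition~\ref{prop:dpi} to conclude $\KL(\mu\|\mu_\theta)\le\KL(P\|Q_\theta)$. For the strict inequality, note that if $g_{\max}>0$ then $\tfrac12 g_{\max}^2\|s_\theta-s\|^2<\tfrac12 g_{\max}^2\varepsilon$ strictly, because $\|s_\theta-s\|^2<\varepsilon$ strictly. This yields $\KL(\mu\|\mu_\theta)<\delta+\tfrac12 g_{\max}^2\varepsilon$. The condition $g_{\max}>0$ holds automatically because $g$ takes values in $(0,\infty)$.
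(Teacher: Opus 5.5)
Your argument is the paper's proof. You take the network from \eqref{eq:L2-approx}, chain Proposition~\ref{prop1} with Proposition~\ref{prop:dpi}, bound $g^2(t)\le g_{\max}^2$, and recognize the remaining integral as $\|s_\theta-s\|_{L^2(\mu)^d}^2$. Your observation that $g_{\max}>0$ is what turns $\le$ into $<$ makes explicit a step the paper takes silently.

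On Step~2, the paper never verifies Assumption~\ref{ass:novikov} for the Hornik network. It simply asserts that a network satisfying both assumptions exists ``by \eqref{eq:L2-approx}'', and that only delivers the Lipschitz property. So your fallback coincides with the paper's implicit treatment.

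Your primary route would not close as sketched, however. Gaussian-type tails give $\mathbb E_P[\exp(c\int_0^T\|X_t\|^2\,dt)]<\infty$ only for $c$ below a threshold set by the spread of the marginals. Already for $X\sim\mathcal N(0,\sigma^2 I)$, one has $\mathbb E[\exp(c\|X\|^2)]<\infty$ iff $c<1/(2\sigma^2)$. Your constant $c=\tfrac12 g_{\max}^2C'$ is governed by the growth constants of $s$ and $s_\theta$, and $L^2(\mu)$-closeness does nothing to make these small. The same caveat applies to the VP/VE examples you would fall back on.

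If you want Step~2 to do real work, target instead the martingale property of the Girsanov exponential. For Lipschitz $s$ and $s_\theta$ this follows from Bene\v{s}'s linear-growth criterion with no exponential moments at all. That legitimizes the use of Proposition~\ref{prop1}, though not the literal claim that $s_\theta$ satisfies Assumption~\ref{ass:novikov}.
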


\begin{remark}[Interpretation and Comparison with Existing Work]

Theorem~1 establishes an explicit quantitative relationship between
approximation of the score function and approximation of the generated
probability distribution. In particular, if the score network
approximates the true score function with an $L^2$ error of at most
$\varepsilon$, then the generated distribution satisfies
\[
\KL(\mu\|\mu_\theta)
<
\delta
+
\frac12 g_{\max}^2\varepsilon,
\]
where
\[
\delta=\KL(p_T\|p_{\mathrm{prior}})
\]
measures the mismatch between the terminal distribution of the forward
diffusion and the prior used to initialize the reverse diffusion.

This result provides a direct theoretical justification for score-based
learning: improving the accuracy of score approximation immediately
improves the quality of the generated distribution in the
Kullback--Leibler sense. In the ideal case where
$p_T=p_{\mathrm{prior}}$, the approximation error is controlled entirely
by the score approximation error.

Our result is complementary to the recent work of Chen et
al.~\cite{Chen2023ScoreAE}. While Chen et al. analyze how finite-sample
score estimation errors propagate to the generated distribution under
low-dimensional structural assumptions, Theorem~1 addresses a different
question. It establishes an approximation-theoretic guarantee showing
that universal approximation of the score function by neural networks
implies approximation of the generated probability distribution.
Consequently, our analysis connects classical neural network
approximation theory with the expressive power of score-based diffusion
models.
\end{remark}

\begin{proof}

Let $s_\theta$ be the neural network satisfying all the conditions in Assumptions \ref{ass:regularity} and \ref{ass:novikov} and 
\[
\|s_\theta-s\|_{L^2(\mu)^d}^2
<
\varepsilon,
\]
whose existence is guaranteed by \eqref{eq:L2-approx}.

By Proposition~\ref{prop:dpi} and Proposition~\ref{prop1},
\begin{align}
\KL(\mu\|\mu_\theta)
&\le
\KL(P\|Q_\theta)
\nonumber \\
&\le
\KL(p_T\|p_{\mathrm{prior}})
+
\frac12
\int_0^T
g^2(t)
\,
\mathbb E_{p_t}
\!\left[
\|s_\theta(X_t,t)-s(X_t,t)\|^2
\right]
dt.
\end{align}

Since
\[
g^2(t)\le g_{\max}^2,
\qquad
\forall\,t\in[0,T],
\]
we obtain
\begin{align}
\KL(\mu\|\mu_\theta)
&\le
\KL(p_T\|p_{\mathrm{prior}})
+
\frac12
g_{\max}^2
\int_0^T
\mathbb E_{p_t}
\!\left[
\|s_\theta(X_t,t)-s(X_t,t)\|^2
\right]
dt
\nonumber \\
&=
\KL(p_T\|p_{\mathrm{prior}})
+
\frac12
g_{\max}^2
\|s_\theta-s\|_{L^2(\mu)^d}^2.
\end{align}

Finally, using
\[
\KL(p_T\|p_{\mathrm{prior}})
\le
\delta
\]
and
\[
\|s_\theta-s\|_{L^2(\mu)^d}^2
<
\varepsilon,
\]
yields
\[
\KL(\mu\|\mu_\theta)
<
\delta
+
\frac12
g_{\max}^2
\varepsilon,
\]
which completes the proof.

\end{proof}
Theorem \ref{thm:main} provides a theoretical justification for score-based diffusion models. It shows that accurate approximation of the score function in the $L^2(\mu)$ sense guarantees approximation of the generated data distribution in Kullback–Leibler divergence. Thus, universal approximation of neural networks directly translates into universal approximation of diffusion models.

\section{Extensions and Future Directions}

The present work establishes a direct connection between approximation of the score function and approximation of the generated probability distribution in Kullback--Leibler divergence. Although our analysis focuses on score-based diffusion models driven by reverse-time stochastic differential equations, the proof technique is considerably more general. In this section, we briefly discuss several possible extensions.

\subsection{Alternative Probability Metrics}

Our analysis employs the Kullback--Leibler divergence because Girsanov's theorem naturally provides an explicit expression for the Radon--Nikodym derivative between the path measures. It would be interesting to investigate whether similar approximation guarantees can be established under other probability metrics, including

\[
W_p(\mu,\mu_\theta), \qquad
\mathrm{TV}(\mu,\mu_\theta), \qquad
D_{\mathrm{JS}}(\mu,\mu_\theta),
\]

where $W_p$ denotes the Wasserstein distance, $\mathrm{TV}$ denotes the total variation distance, and $D_{\mathrm{JS}}$ denotes the Jensen--Shannon divergence. Such results may provide stronger geometric guarantees for generative modeling.

\subsection{Approximation Rates}

Hornik's theorem guarantees the existence of neural network approximations but does not quantify the approximation rate. Combining our framework with quantitative approximation theory for neural networks may lead to explicit convergence rates of the form

\[
\KL(\mu\|\mu_\theta)
=
O(N^{-\alpha}),
\]

where $N$ denotes the network size. Such results would provide a theoretical characterization of the expressive power of neural score models.

\subsection{General Diffusion Models}

The present analysis considers continuous-time score-based diffusion models represented by stochastic differential equations. Since denoising diffusion probabilistic models (DDPMs) arise as discrete approximations of these SDEs, it would be interesting to derive analogous approximation theorems directly for discrete-time diffusion processes.

Another promising direction is to extend the analysis to probability flow ordinary differential equations, where the reverse diffusion is replaced by a deterministic transport equation.

\subsection{Beyond Euclidean Data}

The current framework assumes that the data lie in the Euclidean space $\mathbb{R}^d$. Extending the analysis to diffusion models defined on manifolds, graphs, or other structured state spaces remains an important direction for future research. Such extensions may provide theoretical foundations for geometric diffusion models and scientific machine learning applications.

\subsection{Learning Theory for Diffusion Models}

The present paper establishes an approximation theorem for score-based diffusion models. A natural next step is to study statistical learning guarantees, including sample complexity, estimation error, and generalization error of neural score estimators. Developing a complete statistical learning theory for diffusion models remains an important open problem.
\bibliographystyle{plain}
\bibliography{isitbib}
\end{document}